\documentclass[conference]{IEEEtran}
\IEEEoverridecommandlockouts
\usepackage{cite}
\usepackage{amsmath,amssymb,amsfonts,amsthm}
\newtheorem{proposition}{Proposition}
\usepackage{algorithm}
\usepackage{algorithmic}
\usepackage{graphicx}
\usepackage{textcomp}
\usepackage{xcolor}
\usepackage{booktabs}
\usepackage{float}
\def\BibTeX{{\rm B\kern-.05em{\sc i\kern-.025em b}\kern-.08em
    T\kern-.1667em\lower.7ex\hbox{E}\kern-.125emX}}
\begin{document}

\title{CacheMPC: Certified Cached Model Predictive Control for Quadruped Locomotion}

\author{\IEEEauthorblockN{Nimesh Khandelwal}
\IEEEauthorblockA{\textit{Dept. of Mech. Engg.} \\
\textit{IIT Kanpur}\\
Kanpur, India \\
nimesh20@iitk.ac.in}
\and
\IEEEauthorblockN{Mehul Anand}
\IEEEauthorblockA{\textit{Dept. of Mat. Sci.} \\
\textit{IIT Roorkee}\\
Roorkee, Uttarakhand \\
mehul\_a@mt.iitr.ac.in}
\and
\IEEEauthorblockN{Shakti S. Gupta}
\IEEEauthorblockA{\textit{Dept. of Mech. Engg.} \\
\textit{IIT Kanpur}\\
Kanpur, India \\
ssgupta@iitk.ac.in}
\and
\IEEEauthorblockN{Mangal Kothari\thanks{M. Kothari was with the Department of Aerospace Engineering, Indian Institute of Technology Kanpur, Kanpur, India. He is currently with ADASI, EDGE Group, Abu Dhabi, UAE. This work was performed while he was with the Indian Institute of Technology Kanpur.}}
\IEEEauthorblockA{\textit{Dept. of Aerospace Engg.} \\
\textit{IIT Kanpur}\\
Kanpur, India \\
mangalgnc@gmail.com}
}

\maketitle

\begin{abstract}
Model Predictive Control (MPC) is the standard predictive layer in hierarchical quadruped controllers, but the per-cycle QP solve limits the update rate achievable on embedded processors. Because legged gaits revisit a bounded region of state space, MPC solutions admit caching and reuse. This paper proposes \emph{Certified CacheMPC}: a Locality-Sensitive-Hashed cache of horizon contact-force trajectories, partitioned by contact mode, retrieved at query time and accepted only when an a-posteriori per-query certificate confirms primal feasibility and a Lagrangian dual-gap upper bound on cost suboptimality. A bounded-budget controller schedule combines top-$K$ certified retrieval, a deadline-bounded QP solve, and a shifted last-certified fallback. The framework is evaluated on a Unitree Go2 across $2{,}038$ usable cold-controller MuJoCo trials, including a $600$-trial $n\!=\!50$ campaign at three failure-boundary cells, and a first-deploy session on the on-robot NVIDIA Orin NX. The un-gated cache delivers a $25\times$ median solve-time speedup in simulation and an $18.7\times$ median speedup on hardware. At $n\!=\!50$ no statistically significant difference in closed-loop stable rate is detected between the cache variants and the no-cache baseline at any tested cell. The certificate's contribution to closed-loop safety is not resolvable at the present sample size.
\end{abstract}

\begin{IEEEkeywords}
legged robots, model predictive control, look-up table, locality-sensitive hashing, whole body control
\end{IEEEkeywords}

\section{Introduction}
\label{sec:intro}

Hierarchical control architectures combining a low-rate predictive controller with a high-rate reactive controller have emerged as the state of the art for model-based legged locomotion~\cite{convexmpc,mit_minicheetah,sleiman2021,bledt2018cheetah3,hutter2016anymal,kim2019wbic}, complementing learning-based controllers that have demonstrated robust locomotion at scale~\cite{hwangbo2019,margolis2024}. The predictive controller, typically a convex MPC on a simplified dynamics model, optimises contact forces over a finite horizon; the reactive controller, typically a Whole-Body Controller (WBC)~\cite{sentis2005wbc,wwbc_bellicoso}, uses the full rigid-body dynamics to produce physically consistent joint torques at high frequency. The quality of the MPC solution directly affects tracking and robustness, but the per-cycle solve cost limits the achievable MPC rate on embedded processors such as the NVIDIA Orin NX shipped with the Unitree Go2: a state-dependent contact horizon, larger problem size, or sequential preprocessing all push the per-tick MPC update into the millisecond range, leaving little headroom inside a typical $5$\,ms control budget. The MPC is therefore the limiting layer for the closed-loop update rate.

Several lines of work address this gap. Real-time iteration NMPC~\cite{diehl2002rti,verschueren2022acados} amortises a fresh nonlinear solve over multiple ticks but still requires solver progress every cycle. TinyMPC~\cite{tinympc} pre-computes ADMM solver factorisations but requires linearisable dynamics. Neural function approximation~\cite{mpcnet} distils MPC solutions into networks but requires separate training and lacks runtime correctness guarantees; recent work on safe approximate MPC~\cite{hertneck2018lcss,paulson2020} adds verification but operates offline. Explicit MPC~\cite{explicitmpc,tondel2003mpqp} pre-computes the solution as a piecewise-affine function of the state, but the polyhedral region count scales poorly with problem dimension. Sampling-based predictive controllers such as MPPI~\cite{williams2017mppi} sidestep the QP entirely at the cost of high parallel compute. The idea of caching and reusing MPC solutions appears in parametric settings~\cite{param_adaptive_mpc}, but without per-query safety guarantees.

This paper proposes Certified CacheMPC. A Locality-Sensitive-Hashed (LSH) cache, partitioned by contact mode so that the QP structural pattern is shared within a partition, returns previously computed contact-force trajectories at query time. An a-posteriori per-query certificate, evaluated against the current QP's own KKT data, produces a Lagrangian dual-gap quantity $\Gamma(\bar{\mathbf{U}})$ that upper-bounds the true suboptimality $\Delta J=J(\bar{\mathbf{U}})-J^\star$ of any accepted retrieval, independent of where the proposal originated. A bounded-budget controller schedule combines top-$K$ certified retrieval, a deadline-bounded QP solve, and a shifted last-certified fallback.

The contributions of this paper are:
\begin{enumerate}
\item A deterministic per-query QP certificate (Proposition~\ref{prop:cert}) whose Lagrangian dual-gap quantity $\Gamma(\bar{\mathbf{U}})$ upper-bounds the true suboptimality of any feasible cached proposal independent of its provenance.
\item A bounded-budget controller schedule (Section~\ref{sec:bounded_time_contract}) combining top-$K$ certified retrieval, a deadline-bounded solver, and a shifted last-certified fallback, exercised empirically under deliberately tight deadlines in Section~\ref{sec:bounded_time_demo}.
\item A simulation campaign of $2{,}038$ usable cold-controller trials and a first-deploy hardware session on the Unitree Go2 with the on-robot NVIDIA Orin NX (Sections~\ref{sec:experiments}--\ref{sec:hardware}), characterising the speed, tracking, and failure-boundary safety of four cache configurations against the no-cache baseline.
\end{enumerate}

The remainder of the paper is organised as follows. Section~\ref{sec:related} reviews related work; Section~\ref{sec:background} states the convex MPC and WBC formulations; Section~\ref{sec:cachempc} describes the certified framework and its propositions; Section~\ref{sec:experiments} reports the simulation evaluation; Section~\ref{sec:hardware} reports the hardware session; Section~\ref{sec:threats} discusses threats to validity; Section~\ref{sec:conclusion} concludes; Section~\ref{sec:future_work} lists future-work directions.

\section{Related Work}
\label{sec:related}

\subsection{MPC for Quadruped Locomotion}
Convex MPC formulations based on the single rigid body dynamics (SRBD) model have been widely adopted for quadruped control~\cite{convexmpc,mit_minicheetah,bledt2018cheetah3}. These formulations linearise the rotational dynamics and approximate friction cones as pyramids to cast the problem as a QP, enabling real-time solutions using solvers such as qpOASES~\cite{qpoases} and HPIPM~\cite{hpipm}. Torque-controlled feedback MPC variants~\cite{grandia2019feedback} and whole-body trajectory optimisation frameworks~\cite{sleiman2021,kim2019wbic} extend the formulation to richer dynamics at the cost of a larger online problem.

\subsection{Approaches to Accelerate MPC}
Explicit MPC \cite{explicitmpc} pre-computes the optimal control law as a piecewise affine function of the state by solving a multi-parametric QP offline. The online evaluation reduces to a point-location problem in a polyhedral partition of the state space. While effective for low-dimensional systems, the number of regions grows exponentially with problem dimension, making the full enumeration impractical for the 13-state MPC problems common in quadruped control. Our framework can be viewed as an online, sample-based variant of explicit MPC: each cache entry is a local affine piece of the explicit law, populated lazily from the same QPs that the controller would have solved anyway, and accepted at query time only when an a-posteriori certificate vouches for it.

Function approximation methods learn a mapping from state to MPC solution using supervised learning. MPCNET \cite{mpcnet} trains a neural network on MPC demonstrations and uses it for online inference. Similarly, iterative learning control (ILC) approaches \cite{ilc_ref} maintain a library of control actions indexed by trajectory phase, achieving faster execution through lookup rather than optimization. These methods require offline training or separate optimization stages and are difficult to adapt online to changing environments.

TinyMPC \cite{tinympc} achieves speedup by pre-computing and caching matrix factorizations within an ADMM solver, avoiding expensive operations at runtime. While effective for convex problems with fixed structure, it does not address the broader question of reusing full MPC solutions across similar states.

Warm-starting is a widely used technique where the solution from the previous time step initializes the solver for the current problem. While effective for reducing iteration counts, warm-starting provides diminishing returns when the state changes significantly between MPC updates, and still requires running the solver to completion.

\subsection{Cache-Based and Lookup Approaches}
The idea of caching and reusing optimization solutions has been explored in various control contexts. In \cite{param_adaptive_mpc}, a parameter-adaptive framework stores MPC solutions indexed by system parameters for fast retrieval. Our approach differs in that we cache solutions indexed by the full operational state using LSH, dynamically update the cache during operation, and exploit the contact-schedule structure specific to legged locomotion. To the best of our knowledge, CacheMPC is the first framework to combine LSH-based caching with contact-schedule partitioning and a per-query optimality certificate for quadruped MPC.

\section{Background}
\label{sec:background}

\subsection{Convex MPC for Quadruped Locomotion}
\label{sec:mpc_formulation}

We use a convex MPC formulation based on the SRBD model, following \cite{convexmpc}. The robot is modeled as a single rigid body with mass $m$ and rotational inertia $\mathbf{I}_b \in \mathbb{R}^{3\times 3}$, actuated by contact forces at the feet. The state vector is defined as
\begin{equation}
    \mathbf{x} = \begin{bmatrix} \boldsymbol{\Theta}^T & \mathbf{p}^T & \boldsymbol{\omega}^T & \dot{\mathbf{p}}^T & g \end{bmatrix}^T \in \mathbb{R}^{13},
    \label{eq:state}
\end{equation}
where $\boldsymbol{\Theta} \in \mathbb{R}^3$ is the ZYX Euler angle vector representing the body orientation, $\mathbf{p} \in \mathbb{R}^3$ is the position of the center of mass (CoM), $\boldsymbol{\omega} \in \mathbb{R}^3$ is the angular velocity in the world frame, $\dot{\mathbf{p}} \in \mathbb{R}^3$ is the CoM linear velocity, and $g$ is the gravitational constant (appended for notational convenience). The control input is the stacked vector of contact forces:
\begin{equation}
    \mathbf{u} = \begin{bmatrix} \mathbf{f}_1^T & \mathbf{f}_2^T & \mathbf{f}_3^T & \mathbf{f}_4^T \end{bmatrix}^T \in \mathbb{R}^{12},
    \label{eq:control}
\end{equation}
where $\mathbf{f}_i \in \mathbb{R}^3$ is the ground reaction force at foot $i$.

The continuous-time dynamics are linearized about the current state to obtain $\dot{\mathbf{x}} = \mathbf{A}_c \mathbf{x} + \mathbf{B}_c \mathbf{u}$, where
\begin{equation}
    \mathbf{A}_c = \begin{bmatrix}
        \mathbf{0}_3 & \mathbf{0}_3 & \mathbf{R}_z(\psi) & \mathbf{0}_3 & \mathbf{0}_{3\times 1}\\
        \mathbf{0}_3 & \mathbf{0}_3 & \mathbf{0}_3 & \mathbf{I}_3 & \mathbf{0}_{3\times 1}\\
        \mathbf{0}_3 & \mathbf{0}_3 & \mathbf{0}_3 & \mathbf{0}_3 & \mathbf{0}_{3\times 1}\\
        \mathbf{0}_3 & \mathbf{0}_3 & \mathbf{0}_3 & \mathbf{0}_3 & \hat{\mathbf{g}}\\
        \mathbf{0}_{1\times 3} & \mathbf{0}_{1\times 3} & \mathbf{0}_{1\times 3} & \mathbf{0}_{1\times 3} & 0
    \end{bmatrix},
\end{equation}
with $\mathbf{R}_z(\psi)$ being the yaw rotation matrix and $\hat{\mathbf{g}} = \begin{bmatrix} 0 & 0 & -1\end{bmatrix}^T$. The input matrix $\mathbf{B}_c$ encodes the mapping from contact forces to CoM accelerations and angular accelerations through the moment arms of each foot relative to the CoM. The dynamics are discretized with timestep $\Delta t$ using a zero-order hold to obtain $\mathbf{x}_{k+1} = \mathbf{A}\mathbf{x}_k + \mathbf{B}\mathbf{u}_k$.

The MPC problem is formulated over a horizon of $N$ steps as:
\begin{equation}
\begin{aligned}
    \min_{\mathbf{u}_0, \ldots, \mathbf{u}_{N-1}} \quad & \sum_{k=0}^{N-1} \left[ (\mathbf{x}_k - \mathbf{x}_k^{\text{ref}})^T \mathbf{Q} (\mathbf{x}_k - \mathbf{x}_k^{\text{ref}}) + \mathbf{u}_k^T \mathbf{R} \mathbf{u}_k \right] \\
    \text{s.t.} \quad & \mathbf{x}_{k+1} = \mathbf{A}\mathbf{x}_k + \mathbf{B}\mathbf{u}_k, \\
    & |f_{i,x}| \leq \mu f_{i,z}, \quad |f_{i,y}| \leq \mu f_{i,z}, \\
    & f_{z,\min} \leq f_{i,z} \leq f_{z,\max}, \\
    & f_{i,z} = 0 \quad \text{if foot } i \notin \mathcal{C}_k,
\end{aligned}
\label{eq:mpc_qp}
\end{equation}
where $\mathbf{Q} \succeq 0$ and $\mathbf{R} \succ 0$ are the state and control weight matrices, $\mu$ is the friction coefficient, and $\mathcal{C}_k$ is the set of feet in contact at step $k$, determined by the gait schedule. The friction cone is linearized as a pyramid with four faces. The resulting problem is the strictly convex QP \eqref{eq:cert_qp} and is solved with HPIPM~\cite{hpipm} in production; the formulation is solver-agnostic and the certificate of Section~\ref{sec:cachempc} consumes only the assembled condensed QP data.

\paragraph{Fixed-contact horizon.}
The per-stage contact set $\mathcal{C}_k$ is held at the current measured contact set $\mathcal{C}_0$ for every $k\!\in\![0,N\!-\!1]$. The condensed QP \eqref{eq:cert_qp} therefore reuses one $(\mathbf{B}_k,\underline{\mathbf{d}},\overline{\mathbf{d}})$ block at every stage, partitioning by the four-bit contact mask $\boldsymbol{\sigma}$ is exact for the QP structure, and contact-switch anticipation is delegated to the gait scheduler upstream of the MPC. A horizon-anticipative variant of \cite{convexmpc} that consumes a gait-table-driven contact sequence would re-key the cache on the full $(\mathcal{C}_0,\mathcal{C}_1,\dots,\mathcal{C}_{N-1})$ sequence rather than $\boldsymbol{\sigma}\!=\!\mathcal{C}_0$; the certificate of Section~\ref{sec:cachempc} extends unchanged to that variant because it operates on the assembled condensed QP and does not depend on the source of $\boldsymbol{\sigma}$.

\subsection{KinoDynamic Whole Body Control}
\label{sec:wbc}

The reactive layer uses a KinoDynamic WBC that operates at the proprioceptive estimator rate (250~Hz on our build) and therefore at a higher frequency than the MPC. It consists of two cascaded stages: a Kinematic WBC (KinWBC) followed by a Weighted Whole Body Controller (W-WBC), forming the complete MPC~+~KinWBC~+~W-WBC control stack.

The KinWBC modifies the reference joint positions to better satisfy task-space objectives. Given the current robot configuration and the desired operational-space motions (base pose, foot positions), it computes updated joint position references using prioritized inverse kinematics, following \cite{mit_minicheetah}. This kinematic adjustment ensures that the joint references are consistent with the desired base and foot motions before being passed to the dynamic layer.

The W-WBC uses the full rigid-body dynamics (RBD) of the quadruped to compute joint torques. We adopt the weighted WBC formulation \cite{wwbc_bellicoso} that minimizes a weighted sum of task errors subject to the equation of motion:
\begin{equation}
    \mathbf{M}(\mathbf{q})\ddot{\mathbf{q}} + \boldsymbol{\eta}(\mathbf{q}, \dot{\mathbf{q}}) = \mathbf{S}^T\boldsymbol{\tau} + \mathbf{J}_c^T\mathbf{F}_c,
    \label{eq:eom}
\end{equation}
where $\mathbf{q} \in \mathbb{R}^{18}$ is the generalized coordinate vector, $\mathbf{M}$ is the inertia matrix, $\boldsymbol{\eta}$ represents Coriolis and gravitational terms, $\mathbf{S}$ is the selection matrix for actuated joints, $\boldsymbol{\tau} \in \mathbb{R}^{12}$ are joint torques, and $\mathbf{F}_c$ are the contact forces.

The WBC optimizes joint accelerations $\ddot{\mathbf{q}}$, joint torques $\boldsymbol{\tau}$, and contact forces $\mathbf{F}_c$ simultaneously:
\begin{equation}
\begin{aligned}
    \min_{\ddot{\mathbf{q}}, \boldsymbol{\tau}, \mathbf{F}_c} \;\; & \sum_{i} w_i \| \mathbf{J}_i \ddot{\mathbf{q}} + \dot{\mathbf{J}}_i \dot{\mathbf{q}} - \ddot{\mathbf{x}}_i^{\text{cmd}} \|^2 \\
    & \quad + w_\tau \|\boldsymbol{\tau}\|^2 + w_f \|\mathbf{F}_c - \mathbf{F}_c^{\text{ref}}\|^2\\
    \text{s.t.} \;\; & \text{Eq. } (\ref{eq:eom}), \quad \text{contact constraints,}
\end{aligned}
\label{eq:wbc}
\end{equation}
where $\mathbf{J}_i$ and $\ddot{\mathbf{x}}_i^{\text{cmd}}$ are the Jacobian and commanded acceleration for task $i$ (computed using the KinWBC-adjusted references), and $\mathbf{F}_c^{\text{ref}}$ is the reference contact force from the MPC. The tracking term $\|\mathbf{F}_c - \mathbf{F}_c^{\text{ref}}\|^2$ is the interface through which the MPC solution---cached or freshly solved---enters the W-WBC. The W-WBC enforces dynamic consistency with the full rigid-body dynamics and the friction and unilaterality constraints that the SRBD model abstracts away, while the KinWBC reconciles joint references with the realised task-space motion. The WBC is the final feasibility filter for the full-body dynamics; the per-query certificate of Section~\ref{sec:cachempc} bounds the SRBD-objective gap of any accepted reference by $\beta(\bar{\mathbf{U}})$, leaving model mismatch and full-body feasibility to the WBC.

\section{Certified Memory-Augmented MPC}
\label{sec:cachempc}

A retrieved control is accepted only when an a-posteriori certificate, evaluated against the current predictive problem's own optimality conditions, confirms both primal feasibility and a budgeted suboptimality bound. The certificate elevates a cache entry from a heuristic neighbour in feature space to a verified local piece of the explicit-MPC law. Because the optimal-control map is piecewise affine with discontinuities at active-set boundaries, a near neighbour in feature space can lie in a different critical region; the per-query certificate is what rules this out. Fig.~\ref{fig:architecture} shows the resulting architecture.

\begin{figure*}[t]
    \centering
    \includegraphics[width=\textwidth]{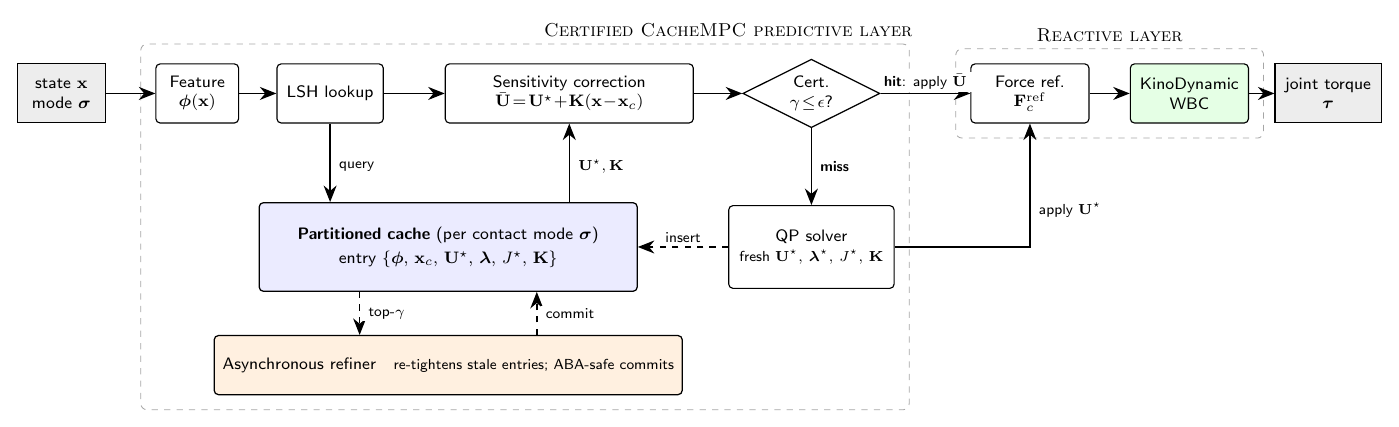}
    \caption{Certified CacheMPC architecture. The predictive layer hashes $\boldsymbol\phi(\mathbf{x})$ into the contact-mode-partitioned cache and accepts a sensitivity-corrected candidate $\bar{\mathbf{U}}$ only when $\rho_{\mathrm{feas}}$ and the dual-gap $\Gamma(\bar{\mathbf{U}})$ satisfy the budget $\beta(\bar{\mathbf{U}})$; on rejection the QP is solved and the fresh tuple is inserted. An asynchronous refiner re-tightens stale entries, and the accepted $\mathbf{F}_c^{\mathrm{ref}}$ is consumed by the KinoDynamic WBC.}
    \label{fig:architecture}
\end{figure*}

\subsection{Feature Vector and Candidate Generation via LSH}
\label{sec:feature_lsh}

Each cache entry is indexed by a feature vector $\boldsymbol{\phi}\in\mathbb{R}^{18}$ that compresses the operationally relevant state:
\begin{equation}
    \boldsymbol{\phi} = \begin{bmatrix} \dot{\mathbf{p}}^T & \dot{\mathbf{p}}_{\text{des}}^T & ({}^{\mathcal{B}}\mathbf{r}_{p_1})^T & \cdots & ({}^{\mathcal{B}}\mathbf{r}_{p_4})^T \end{bmatrix}^T,
    \label{eq:feature}
\end{equation}
combining the measured and desired CoM velocities and the four body-frame foot positions. The cache is partitioned by contact mode $\boldsymbol{\sigma}$ so that the QP structure (constraint shape, active swing/stance bits) is shared within a partition; a trotting gait, for example, populates only two partitions. Approximate nearest-neighbour search inside each partition uses Locality-Sensitive Hashing~\cite{lsh_original,datar2004lsh,andoni2008lsh} with the Euclidean $p$-stable construction of~\cite{datar2004lsh}: $L$ tables of $k$ hash functions $h_{\mathbf{a},b}(\boldsymbol{\phi})=\lfloor(\mathbf{a}\!\cdot\!\boldsymbol{\phi}+b)/w\rfloor$, $\mathbf{a}\sim\mathcal{N}(\mathbf{0},\mathbf{I})$, $b\sim\mathcal{U}[0,w]$, giving sub-linear query time in 18-dimensional space. LSH serves strictly as a candidate generator; acceptance is decided by the certificate of Section~\ref{sec:cert}.

\subsection{Condensed Problem and Cache Entry Schema}
\label{sec:cert_qp}

After eliminating the predicted states via the linearised dynamics of Section~\ref{sec:mpc_formulation}, the convex MPC at state $\mathbf{x}$ and contact mode $\boldsymbol{\sigma}$ is the strictly convex QP
\begin{equation}
    \min_{\mathbf{U}}\;\tfrac{1}{2}\mathbf{U}^{\!\top}\mathbf{H}\,\mathbf{U} + \mathbf{g}^{\!\top}\mathbf{U}
    \quad\text{s.t.}\quad
    \underline{\mathbf{d}} \le \mathbf{C}\,\mathbf{U} \le \overline{\mathbf{d}},
    \label{eq:cert_qp}
\end{equation}
where $\mathbf{U}\in\mathbb{R}^{n}$ stacks the horizon contact forces, $\mathbf{H}\succ 0$ (regularised by the controller), and $(\mathbf{C},\underline{\mathbf{d}},\overline{\mathbf{d}})$ collect the linearised friction pyramid, $f_z$ bounds, and contact-activation constraints. Let $J(\mathbf{U};\mathbf{x}) = \tfrac{1}{2}\mathbf{U}^{\!\top}\mathbf{H}\mathbf{U} + \mathbf{g}^{\!\top}\mathbf{U}$ with constrained optimum $J^\star$. Each cache entry stores
$(\boldsymbol{\phi},\mathbf{x}_c,\mathbf{U}^\star,\boldsymbol{\lambda}_\ell,\boldsymbol{\lambda}_u,J^\star,\mathbf{K})$,
where $(\boldsymbol{\lambda}_\ell,\boldsymbol{\lambda}_u)\!\ge\!\mathbf{0}$ are the dual multipliers of the two sides of \eqref{eq:cert_qp}, and $\mathbf{K} = \partial\mathbf{U}^\star/\partial\mathbf{x}$ is the active-set sensitivity from the entry's KKT system. For the SRBD formulation $(\mathbf{C},\underline{\mathbf{d}},\overline{\mathbf{d}})$ depend on the contact mode but not on the state, so a cached control feasible at insertion stays feasible for every query in the same partition; only suboptimality must be bounded. State-dependent constraint terms restore a non-trivial feasibility component in the certificate below.

\subsection{Per-Query Certificate}
\label{sec:cert}

Given a candidate control $\bar{\mathbf{U}}$ (retrieved, possibly sensitivity-corrected) the certificate returns two quantities.

\paragraph{Primal feasibility.} The exact violation is
\begin{equation}
    \rho_{\mathrm{feas}}(\bar{\mathbf{U}})
    = \max_i \max\!\Big(\underline{d}_i \!-\! (\mathbf{C}\bar{\mathbf{U}})_i,\;
        (\mathbf{C}\bar{\mathbf{U}})_i \!-\! \overline{d}_i,\; 0\Big),
    \label{eq:cert_feas}
\end{equation}
computed in one matrix--vector product.

\paragraph{Dual-gap bound on suboptimality.} The Lagrangian dual of \eqref{eq:cert_qp} at any $\boldsymbol{\lambda}_\ell,\boldsymbol{\lambda}_u\!\ge\!\mathbf{0}$ is, with $\mathbf{s} = \mathbf{g} + \mathbf{C}^{\!\top}(\boldsymbol{\lambda}_u - \boldsymbol{\lambda}_\ell)$,
\begin{equation}
    q(\boldsymbol{\lambda}_\ell,\boldsymbol{\lambda}_u)
    = -\tfrac{1}{2}\mathbf{s}^{\!\top}\mathbf{H}^{-1}\mathbf{s}
      - \boldsymbol{\lambda}_u^{\!\top}\overline{\mathbf{d}}
      + \boldsymbol{\lambda}_\ell^{\!\top}\underline{\mathbf{d}},
    \label{eq:cert_dual}
\end{equation}
and by weak duality $q \le J^\star$. Hence for any feasible $\bar{\mathbf{U}}$,
\begin{multline}
    \Gamma(\bar{\mathbf{U}}) \;=\; J(\bar{\mathbf{U}};\mathbf{x}) - q(\boldsymbol{\lambda}_\ell,\boldsymbol{\lambda}_u) \\
    \;\ge\; J(\bar{\mathbf{U}};\mathbf{x}) - J^\star \;=\; \Delta J(\bar{\mathbf{U}}) \;\ge\; 0
    \label{eq:cert_gap}
\end{multline}
is a rigorous upper bound on the true suboptimality $\Delta J$ of $\bar{\mathbf{U}}$. Throughout the paper $\Delta J$ denotes the true gap and $\Gamma$ the certificate's upper-bound quantity. Evaluating \eqref{eq:cert_dual} costs one Cholesky factorisation of $\mathbf{H}$ and a back-substitution. The reported implementation evaluates $\Gamma$ at $\boldsymbol{\lambda}\!=\!\mathbf{0}$ (the unconstrained relaxation $q = -\tfrac12\mathbf{g}^{\!\top}\mathbf{H}^{-1}\mathbf{g}$), which is the loosest valid weak-duality bound. A cached-dual variant of the bound would be tight whenever the query shared the entry's active set.

\paragraph{Acceptance rule.} Define the per-query relative budget
\begin{equation}
    \beta(\bar{\mathbf{U}}) \;=\; \epsilon_{\mathrm{abs}} + \epsilon_{\mathrm{rel}}\,\lvert J(\bar{\mathbf{U}})\rvert.
    \label{eq:cert_budget}
\end{equation}
The cached candidate is accepted iff
\begin{equation}
    \rho_{\mathrm{feas}}(\bar{\mathbf{U}}) \le \epsilon_{\mathrm{feas}}
    \;\;\text{and}\;\;
    \Gamma(\bar{\mathbf{U}}) \le \beta(\bar{\mathbf{U}}),
    \label{eq:cert_accept}
\end{equation}
otherwise the QP is solved online and the new $(\mathbf{U}^\star,\boldsymbol{\lambda}^\star,J^\star,\mathbf{K})$ tuple is inserted. The budget $\beta$ is state-dependent through $|J(\bar{\mathbf{U}})|$; a per-tick constant guarantee requires an operating tube with an upper bound $J_{\max}$ on $|J|$, giving $\beta\!\le\!\beta_{\max}=\epsilon_{\mathrm{abs}}+\epsilon_{\mathrm{rel}}J_{\max}$.

\paragraph{Dual-feasibility validation.}
The bound \eqref{eq:cert_dual} requires $\boldsymbol{\lambda}_\ell,\boldsymbol{\lambda}_u\!\ge\!\mathbf{0}$ and finite. Because the reported implementation evaluates $\Gamma$ at $\boldsymbol{\lambda}\!=\!\mathbf{0}$, dual-feasibility is satisfied by construction. Any cached-dual variant of the bound would need to guard the reused multipliers for finiteness and non-negativity and fall back to $\boldsymbol{\lambda}\!=\!\mathbf{0}$ on failure, so that the bound the controller acts on is always weak-duality-valid.

\begin{proposition}[Deterministic per-query QP certificate]
\label{prop:cert}
Let \eqref{eq:cert_qp} be the current QP with $\mathbf{H}\!\succ\!\mathbf{0}$ and optimum $J^\star$. Assume $\bar{\mathbf{U}}$ is feasible for \eqref{eq:cert_qp}, i.e.\ $\underline{\mathbf{d}}\le\mathbf{C}\bar{\mathbf{U}}\le\overline{\mathbf{d}}$. Then for any finite, dual-feasible $(\boldsymbol{\lambda}_\ell,\boldsymbol{\lambda}_u)\!\ge\!\mathbf{0}$ the dual-gap quantity $\Gamma(\bar{\mathbf{U}})$ defined in \eqref{eq:cert_gap} satisfies
$$\Delta J(\bar{\mathbf{U}}) \;=\; J(\bar{\mathbf{U}};\mathbf{x}) - J^\star \;\le\; \Gamma(\bar{\mathbf{U}}).$$
If, in addition, the acceptance rule \eqref{eq:cert_accept} succeeds, then $\Delta J(\bar{\mathbf{U}})\le\beta(\bar{\mathbf{U}})$.
\end{proposition}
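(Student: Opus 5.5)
The argument is weak duality for the strictly convex QP \eqref{eq:cert_qp}, written out so that the closed form \eqref{eq:cert_dual} is verified rather than assumed. First I form the Lagrangian with multipliers $\boldsymbol{\lambda}_\ell,\boldsymbol{\lambda}_u\ge\mathbf{0}$ attached to the two sides of the box constraint:
\begin{equation*}
\mathcal{L}(\mathbf{U},\boldsymbol{\lambda}_\ell,\boldsymbol{\lambda}_u)=\tfrac12\mathbf{U}^{\!\top}\mathbf{H}\mathbf{U}+\mathbf{g}^{\!\top}\mathbf{U}+\boldsymbol{\lambda}_u^{\!\top}(\mathbf{C}\mathbf{U}-\overline{\mathbf{d}})+\boldsymbol{\lambda}_\ell^{\!\top}(\underline{\mathbf{d}}-\mathbf{C}\mathbf{U}).
\end{equation*}
Because $\mathbf{H}\succ\mathbf{0}$, $\mathcal{L}$ is strictly convex in $\mathbf{U}$. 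Its unique unconstrained minimiser is $\mathbf{U}=-\mathbf{H}^{-1}\mathbf{s}$ with $\mathbf{s}=\mathbf{g}+\mathbf{C}^{\!\top}(\boldsymbol{\lambda}_u-\boldsymbol{\lambda}_\ell)$. Substituting it back gives $\inf_{\mathbf{U}}\mathcal{L}=q(\boldsymbol{\lambda}_\ell,\boldsymbol{\lambda}_u)$ exactly as in \eqref{eq:cert_dual}. This is the only computational step; I will just record the completion of the square.

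Next, let $\mathbf{U}^\star$ be the constrained minimiser. It exists and is unique because the objective is strictly convex and coercive and the feasible set is closed and nonempty, the latter since $\bar{\mathbf{U}}$ is feasible by hypothesis. At $\mathbf{U}^\star$ feasibility gives $\mathbf{C}\mathbf{U}^\star-\overline{\mathbf{d}}\le\mathbf{0}$ and $\underline{\mathbf{d}}-\mathbf{C}\mathbf{U}^\star\le\mathbf{0}$. With nonnegative multipliers, both penalty terms are therefore nonpositive. This yields
\begin{equation*}
q(\boldsymbol{\lambda}_\ell,\boldsymbol{\lambda}_u)\le\mathcal{L}(\mathbf{U}^\star,\boldsymbol{\lambda}_\ell,\boldsymbol{\lambda}_u)\le J(\mathbf{U}^\star;\mathbf{x})=J^\star.
\end{equation*}
Subtracting from $J(\bar{\mathbf{U}};\mathbf{x})$ gives $\Gamma(\bar{\mathbf{U}})=J(\bar{\mathbf{U}};\mathbf{x})-q\ge J(\bar{\mathbf{U}};\mathbf{x})-J^\star=\Delta J(\bar{\mathbf{U}})$. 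Feasibility of $\bar{\mathbf{U}}$ and optimality of $J^\star$ also give $\Delta J\ge0$.

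Finiteness of the multipliers is needed only so that $q$ is a real number. The argument never uses any relation between the multipliers and the cache entry or the query, which is exactly the provenance-independence claimed. The special case $\boldsymbol{\lambda}=\mathbf{0}$ used in the implementation is covered directly.

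For the second claim I chain the inequalities: if \eqref{eq:cert_accept} holds, then $\Delta J(\bar{\mathbf{U}})\le\Gamma(\bar{\mathbf{U}})\le\beta(\bar{\mathbf{U}})$.

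The main obstacle is a subtlety rather than a computation. The acceptance rule only checks $\rho_{\mathrm{feas}}\le\epsilon_{\mathrm{feas}}$, not exact feasibility. The proposition sidesteps this by assuming $\bar{\mathbf{U}}$ is exactly feasible. In the proof I will note that the second claim relies on that standing hypothesis, or equivalently on taking $\epsilon_{\mathrm{feas}}=0$. I will also remark that under the SRBD partition argument of Section~\ref{sec:cert_qp}, cached candidates are exactly feasible within their contact-mode partition, so the hypothesis is natural there.
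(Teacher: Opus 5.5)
Your proposal is correct and follows the same route as the paper's proof: feasibility of $\bar{\mathbf{U}}$ gives $J^\star\le J(\bar{\mathbf{U}})$, weak duality gives $q\le J^\star$, subtracting yields $\Delta J\le\Gamma$, and the acceptance rule then chains this to $\Gamma\le\beta$. The only differences are additions rather than changes: you derive the closed form \eqref{eq:cert_dual} by completing the square, verify weak duality directly at $\mathbf{U}^\star$, and argue that $\mathbf{U}^\star$ exists, all of which the paper takes for granted; your caveat about $\epsilon_{\mathrm{feas}}>0$ matches the paper's own remark after the proposition.
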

\begin{proof}
Feasibility of $\bar{\mathbf{U}}$ gives $J^\star \le J(\bar{\mathbf{U}})$; weak duality at any dual-feasible $(\boldsymbol{\lambda}_\ell,\boldsymbol{\lambda}_u)$ gives $q\le J^\star$. Subtracting yields $\Delta J = J(\bar{\mathbf{U}}) - J^\star \le J(\bar{\mathbf{U}}) - q = \Gamma$, and the acceptance rule \eqref{eq:cert_accept} enforces $\Gamma\le\beta$.
\end{proof}

The implementation accepts $\rho_{\mathrm{feas}}\le\epsilon_{\mathrm{feas}}$ with $\epsilon_{\mathrm{feas}}=10^{-4}$, three orders of magnitude below the smallest constraint scale; this is a numerical tolerance for round-off in $\rho_{\mathrm{feas}}$ evaluation rather than a relaxation of feasibility in Proposition~\ref{prop:cert}. Proposition~\ref{prop:cert} also makes no reference to the source of $\bar{\mathbf{U}}$ or $(\boldsymbol{\lambda}_\ell,\boldsymbol{\lambda}_u)$: the bound is valid whether the proposal is obtained by approximate nearest-neighbour search, drawn from an offline dictionary, asynchronously refined, or constructed by sensitivity-corrected affine extrapolation.

\begin{proposition}[Retrieval safety is independent of approximate nearest-neighbour search]
\label{prop:ann}
Suppose the controller applies a cached proposal $\bar{\mathbf{U}}$ only after the acceptance rule \eqref{eq:cert_accept} succeeds with $\rho_{\mathrm{feas}}(\bar{\mathbf{U}})=0$ (exact feasibility), and otherwise solves \eqref{eq:cert_qp} exactly. Then for any retrieval policy --- including one that returns an arbitrary proposal, returns no proposal, or returns one that violates the cached entry's active-set assumptions --- every applied control $\mathbf{U}_{\mathrm{applied}}$ satisfies $\Delta J(\mathbf{U}_{\mathrm{applied}})\le\beta(\mathbf{U}_{\mathrm{applied}})$.
\end{proposition}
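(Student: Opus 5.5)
The plan is a case split on which branch of the controller produced $\mathbf{U}_{\mathrm{applied}}$ at the current tick. The schedule has only two possibilities: either a cached proposal $\bar{\mathbf{U}}$ passed the acceptance rule \eqref{eq:cert_accept} with $\rho_{\mathrm{feas}}(\bar{\mathbf{U}})=0$ and was applied, or the QP \eqref{eq:cert_qp} was solved exactly and its minimiser $\mathbf{U}^\star$ was applied. The retrieval policy only determines \emph{which} $\bar{\mathbf{U}}$ (if any) is presented to the certificate. The proof will therefore treat the policy as an arbitrary, possibly adversarial oracle and never use any property of it.

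\emph{Accepted branch.} Here $\rho_{\mathrm{feas}}(\bar{\mathbf{U}})=0$, so by \eqref{eq:cert_feas} every component satisfies $\underline{d}_i\le(\mathbf{C}\bar{\mathbf{U}})_i\le\overline{d}_i$. Hence $\bar{\mathbf{U}}$ is exactly feasible for the \emph{current} QP, which is the hypothesis of Proposition~\ref{prop:cert}. I will take the multipliers used in $\Gamma$ to be $\boldsymbol{\lambda}=\mathbf{0}$, as in the reported implementation, or any guarded cached multipliers that are finite and nonnegative. In either case they are dual-feasible, and weak duality applies. Proposition~\ref{prop:cert} then gives $\Delta J(\bar{\mathbf{U}})\le\Gamma(\bar{\mathbf{U}})\le\beta(\bar{\mathbf{U}})$. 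The bound is computed from the current $(\mathbf{H},\mathbf{g},\mathbf{C},\underline{\mathbf{d}},\overline{\mathbf{d}})$, not from the cached entry's data. So a proposal violating the entry's active-set assumptions, or an arbitrary vector, is covered without change.

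\emph{Solve branch.} This branch covers rejection, no proposal, or a failed feasibility check. The applied control is the exact optimum, so $\Delta J(\mathbf{U}^\star)=J^\star-J^\star=0$. It remains to show $\beta(\mathbf{U}^\star)=\epsilon_{\mathrm{abs}}+\epsilon_{\mathrm{rel}}|J(\mathbf{U}^\star)|\ge 0$. This holds provided $\epsilon_{\mathrm{abs}},\epsilon_{\mathrm{rel}}\ge0$, which I will state explicitly as the standing convention on the budget.

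The main obstacle is not mathematical depth but making the hypotheses airtight. I must check that the dual used in $\Gamma$ is always valid; the dual-feasibility validation paragraph guarantees this via the fallback to $\boldsymbol{\lambda}=\mathbf{0}$. I must also check that exact feasibility ($\rho_{\mathrm{feas}}=0$ rather than $\le\epsilon_{\mathrm{feas}}$) is what licenses Proposition~\ref{prop:cert}. Existence of $J^\star$ also needs a remark. Because $\mathbf{H}\succ0$ the objective is coercive, so in the accepted branch the existence of a feasible $\bar{\mathbf{U}}$ ensures a finite optimum. In the solve branch the statement presupposes that \eqref{eq:cert_qp} is feasible and solvable. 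I will state this nonemptiness assumption explicitly rather than derive it.
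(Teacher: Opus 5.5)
Your proposal is correct and follows the paper's proof: a two-case split in which the accepted branch invokes Proposition~\ref{prop:cert} (exact feasibility from $\rho_{\mathrm{feas}}=0$ plus weak duality at nonnegative multipliers), and the solve branch applies $\mathbf{U}^\star$ with $\Delta J=0$. Your explicit assumption $\epsilon_{\mathrm{abs}},\epsilon_{\mathrm{rel}}\ge 0$, which gives $0\le\beta(\mathbf{U}^\star)$, and your remark on solvability of the QP fill in details that the paper's one-line argument leaves implicit.
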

\begin{proof}
If the acceptance rule \eqref{eq:cert_accept} succeeds with $\rho_{\mathrm{feas}}(\bar{\mathbf{U}})=0$, Proposition~\ref{prop:cert} gives $\Delta J(\bar{\mathbf{U}})\le\beta(\bar{\mathbf{U}})$; otherwise the controller applies the QP optimum $\mathbf{U}^\star$, for which $\Delta J=0$.
\end{proof}

\noindent The numerical feasibility tolerance $\epsilon_{\mathrm{feas}}$ inherits the same caveat as Proposition~\ref{prop:cert}. Approximate-nearest-neighbour errors therefore reduce the cache hit rate but cannot compromise acceptance correctness; the same statement covers stale refiner entries, sensitivity-corrected proposals whose linearisation has crossed an active-set boundary, and discovered proposals whose anchor was sampled offline.

\subsection{Bounded-Budget Controller Schedule}
\label{sec:bounded_time_contract}

The controller enforces a deterministic schedule at every MPC tick. Let the tick budget $T_{\mathrm{total}}$ be partitioned as $T_{\mathrm{total}} = T_{\mathrm{cache}} + T_{\mathrm{solve}} + T_{\mathrm{fallback}}$:
\begin{enumerate}
\item Within $T_{\mathrm{cache}}$, attempt certified retrieval over the top-$K$ LSH candidates; if any candidate $\bar{\mathbf{U}}$ satisfies \eqref{eq:cert_accept}, apply it and return.
\item Within $T_{\mathrm{solve}}$, solve the QP \eqref{eq:cert_qp}; if the solver returns within budget, apply $\mathbf{U}^\star$ and insert into the cache.
\item If the deadline is missed, apply a shifted last-certified feasible sequence.
\end{enumerate}
The controller executes this schedule at every tick. Section~\ref{sec:bounded_time_demo} reports an empirical stress trial under deliberately tight deadlines. In nominal MPC the analogue of step~(3) is absent: a solver that exceeds its deadline leaves the controller without a defined action.

\subsection{Sensitivity-Augmented Retrieval and Active-Set Filtering}
\label{sec:sensitivity}

A cache entry computed at anchor state $\mathbf{x}_c$ can be upgraded from a constant proposal to a first-order affine proposal via the stored sensitivity:
\begin{equation}
    \bar{\mathbf{U}}(\mathbf{x}) = \mathbf{U}^\star + \mathbf{K}\,(\mathbf{x} - \mathbf{x}_c).
    \label{eq:sens_correct}
\end{equation}
\textbf{Scope.} \eqref{eq:sens_correct} is exact only when the query parameter $\theta = (\mathbf{x},\mathbf{x}_g,\mathbf{r}_p,\boldsymbol{\sigma},\mu,\dots)$ varies along $\mathbf{x}$ alone, with reference $\mathbf{x}_g$, foot positions $\mathbf{r}_p$, contact mode $\boldsymbol{\sigma}$, and model parameters held at the anchor's values. In the present quadruped use-case the actual cache neighbourhood varies along all of these axes simultaneously, so \eqref{eq:sens_correct} is a \emph{first-order proposal heuristic}, not an exact piece of the explicit-MPC law. Proposition~\ref{prop:cert} is what makes its use safe: the certificate is evaluated on the corrected candidate, and the candidate is applied only if the certificate succeeds. Errors in the linearisation (missing $\partial/\partial\mathbf{x}_g$, $\partial/\partial\mathbf{r}_p$, $\partial/\partial\boldsymbol{\sigma}$ terms) reduce hit rate but cannot compromise the bound.

\paragraph{Active-set region filter.} The cached duals $(\boldsymbol{\lambda}_\ell,\boldsymbol{\lambda}_u)$ identify the entry's active set. As a hit-quality pre-screen, the candidate $\bar{\mathbf{U}}$ is rejected when its binding/non-binding pattern is inconsistent with the cached one within a cert-region band of $25$\,N. This filter is not required for Proposition~\ref{prop:cert} and does not represent exact membership in a critical region; it costs one matrix--vector product against $\mathbf{C}$ per query, and avoids the Cholesky in \eqref{eq:cert_dual} for proposals that would fail \eqref{eq:cert_accept} anyway. The region filter is enabled in the production configuration.

\subsection{Certified Online Execution}

Algorithm~\ref{alg:cert_cachempc} summarises the certified retrieval. The condensed-QP data $(\mathbf{H},\mathbf{g},\mathbf{C},\underline{\mathbf{d}},\overline{\mathbf{d}})$ is assembled once per tick because the certificate consumes it; this is the only online cost shared between the hit and miss paths. The hit path additionally pays the certificate cost (one Cholesky of $\mathbf{H}$, one back-substitution, and an $\mathcal{O}(n)$ primal-cost evaluation) and skips the QP solve. An asynchronous refiner re-optimises entries with the largest $\Gamma$ values under a monotone-improvement invariant and runs off the real-time hit/miss path; the per-step guarantee remains the certificate of \eqref{eq:cert_accept}. Cache entries are serialised to disk on shutdown so subsequent sessions start populated.

\begin{algorithm}[htbp]
\caption{Certified CacheMPC Retrieval}
\label{alg:cert_cachempc}
\begin{algorithmic}[1]
\REQUIRE state $\mathbf{x}$, contact mode $\boldsymbol{\sigma}$, cache $\mathcal{H}$, tolerances $(\epsilon_{\mathrm{feas}},\epsilon_{\mathrm{abs}},\epsilon_{\mathrm{rel}})$
\ENSURE force reference $\mathbf{F}_c^{\mathrm{ref}}$
\STATE $\boldsymbol{\phi}\!\leftarrow\!\Phi(\mathbf{x})$;\; select partition $\mathcal{H}_{\boldsymbol{\sigma}}$
\STATE $\mathcal{N}\!\leftarrow\!\textrm{LSH-TopK}(\mathcal{H}_{\boldsymbol{\sigma}},\boldsymbol{\phi},K)$ \COMMENT{up to $K$ neighbour entries, sorted by $\|\boldsymbol{\phi}-\boldsymbol{\phi}_j\|$}
\STATE assemble $(\mathbf{H},\mathbf{g},\mathbf{C},\underline{\mathbf{d}},\overline{\mathbf{d}})$ \COMMENT{condense; no solve}
\STATE $\mathrm{accept}\!\leftarrow\!\textsc{false}$
\FORALL{$(\mathbf{U}^\star_j,\boldsymbol{\lambda}_j,\mathbf{K}_j,\mathbf{x}_{c,j})\in\mathcal{N}$ \textbf{in order, while in $T_{\mathrm{cache}}$ budget}}
    \STATE $\bar{\mathbf{U}}\!\leftarrow\!\mathbf{U}^\star_j + \mathbf{K}_j(\mathbf{x}-\mathbf{x}_{c,j})$ \COMMENT{sensitivity correction}
    \IF{region-filter active set of $\boldsymbol{\lambda}_j$ inconsistent with $\mathbf{C}\bar{\mathbf{U}}$ vs.\ $\underline{\mathbf{d}},\overline{\mathbf{d}}$}
        \STATE \textbf{continue} \COMMENT{region-filter reject; try next candidate}
    \ENDIF
    \STATE $(\rho_{\mathrm{feas}},\Gamma)\!\leftarrow\!\textrm{Certificate}(\mathbf{H},\mathbf{g},\mathbf{C},\underline{\mathbf{d}},\overline{\mathbf{d}},\bar{\mathbf{U}},\boldsymbol{\lambda}\!=\!\mathbf{0})$ \COMMENT{production uses $\boldsymbol{\lambda}\!=\!\mathbf{0}$; cached duals stored for future tighter bound}
    \STATE $\beta\!\leftarrow\!\epsilon_{\mathrm{abs}}+\epsilon_{\mathrm{rel}}|J(\bar{\mathbf{U}})|$
    \IF{$\rho_{\mathrm{feas}}\!\le\!\epsilon_{\mathrm{feas}}$ \AND $\Gamma\!\le\!\beta$}
        \STATE $\mathrm{accept}\!\leftarrow\!\textsc{true}$;\; \textbf{break}
    \ENDIF
\ENDFOR
\IF{$\mathrm{accept}$}
    \STATE $\mathbf{F}_c^{\mathrm{ref}}\!\leftarrow\!\bar{\mathbf{U}}$ \COMMENT{certified hit}
\ELSE
    \STATE $(\mathbf{U}^\star,\boldsymbol{\lambda}^\star,J^\star,\mathbf{K}^\star)\!\leftarrow\!\textrm{SolveMPC}(\mathbf{x},\boldsymbol{\sigma})$
    \STATE insert into $\mathcal{H}_{\boldsymbol{\sigma}}$;\; $\mathbf{F}_c^{\mathrm{ref}}\!\leftarrow\!\mathbf{U}^\star$
\ENDIF
\STATE forward $\mathbf{F}_c^{\mathrm{ref}}$ to the KinoDynamic WBC
\end{algorithmic}
\end{algorithm}

\subsection{Stability Considerations}
\label{sec:stability}

The classical suboptimal-MPC analysis~\cite{scokaert1999suboptimal,pannocchia2011conditions} establishes Lyapunov decrease towards an equilibrium under the standard nominal-MPC assumptions~\cite{mayne2000constrained}; robust extensions under bounded disturbance~\cite{limon2006iss} give input-to-state stability rather than asymptotic convergence. Periodic locomotion violates two of those assumptions: the closed-loop attractor is a phase-parameterised orbit rather than a point, and the applied torque passes through a WBC projection layer that the MPC does not model. A formal practical-orbital stability theorem under the per-query budget $\beta(\bar{\mathbf{U}})$ of \eqref{eq:cert_budget} would require a phase-conditioned value function, recursive feasibility under contact-mode switches, a strong-convexity conversion from cost gap to control deviation (scaling as $\sqrt{\beta/\sigma_{\min}(\mathbf{H})}$ rather than linearly in $\beta$), and a disturbance model that subsumes WBC projection error, SRBD--plant mismatch, and state-estimator error.

Within the present paper the certificate plays an operational role: it bounds the per-query cost gap by $\beta(\bar{\mathbf{U}})$ exactly when \eqref{eq:cert_accept} succeeds, and the proposal mechanism (cache, refiner, sensitivity correction) maximises the rate at which the controller stays inside the certified regime. The KinoDynamic WBC is the final feasibility filter for the full rigid-body dynamics and absorbs part of the residual model mismatch.

\section{Experiments and Results}
\label{sec:experiments}

\subsection{Experimental Setup}

We evaluate CacheMPC on the Unitree Go2 quadruped robot in MuJoCo \cite{mujoco}. The convex predictive backend uses an SRBD model with horizon $N=5$ at $\Delta t = 50$~ms ($T = 0.25$~s); the WBC runs at the proprioceptive estimator rate (250~Hz). Friction coefficient $\mu = 0.3$ and per-foot normal-force bounds $0 \le f_{z,i} \le 150$~N (the lower bound is set to $0$ in the production configuration so that swing feet with $f_{i,z}=0$ remain feasible). The LSH index uses $L=35$ tables of $k=5$ hash functions each, bucket width $w \!\propto\! r$ for an acceptance radius $r=0.20$. The feature vector $\boldsymbol{\phi}\in\mathbb{R}^{18}$ matches \eqref{eq:feature}: $\boldsymbol{\phi} = [\,\dot{\mathbf{p}}_B^\top \;|\; \dot{\mathbf{p}}_{\mathrm{des},B}^\top \;|\; ({}^{\mathcal{B}}\mathbf{r}_{p_1})^\top \;|\; \cdots \;|\; ({}^{\mathcal{B}}\mathbf{r}_{p_4})^\top\,]^\top$, with both velocities and foot positions expressed in the body frame; the desired CoM linear velocity is rotated into the body frame and concatenated with the desired yaw rate. Body-frame bucketing makes the index SE(2)-invariant.

\paragraph{Compute platforms.} Timing in Section~\ref{sec:experiments} is reported on the development workstation (P-W). The on-robot platform (P-O), used in Section~\ref{sec:hardware}, is the NVIDIA Orin NX 16\,GB module shipped on the Go2; the ARM64 Orin NX is approximately $3$--$5\times$ slower than the P-W workstation on the inner QP solve and on Eigen-heavy assembly code. Per-stage breakdowns are wall-clock measurements taken inside the controller covering the entire MPC update call.

We evaluate four cache configurations on each of two downstream architectures. Each configuration is fully specified by three caching flags in the production controller's YAML configuration; Table~\ref{tab:variants} lists the runtime values of \texttt{use\_certificate}, \texttt{use\_sensitivity}, and \texttt{use\_region\_filter}. The region-filter flag defaults to \texttt{true} in the C++ controller, so Cache-Cert retains the region filter even when its YAML override is absent. Convex disables caching entirely.

\begin{table}[htbp]
\centering
\caption{Cache configurations and their three runtime flags.}
\label{tab:variants}
\begin{tabular}{lccc}
\toprule
Configuration & Certificate & Sensitivity & Region filter \\
\midrule
Convex (no cache) & --- & --- & --- \\
Cache-NoCert      & off & off & off \\
Cache-Cert        & on  & off & on  \\
Cache-Full        & on  & on  & on  \\
\bottomrule
\end{tabular}
\end{table}

The downstream architectures are (a) \textbf{WBC}, the KinoDynamic Whole-Body Controller of Section~\ref{sec:wbc} that consumes the MPC's contact-force reference, and (b) \textbf{MRT}, a model-reference-tracking controller that maps the MPC's contact forces to joint torques directly via $\boldsymbol{\tau}=-\mathbf{J}_{aa}^{\top}\mathbf{F}_c$ with a swing-leg foot impedance term, removing the WBC's full-dynamics projection so that the MPC reference reaches joint torque without an absorption layer.

Trials are run on a cold-controller IID protocol: one fresh controller spawn per trial, paired pseudo-random seeds, and a cache file that is wiped at trial start. The simulation evaluation comprises four exploratory campaigns covering the eight (configuration, architecture) cells across velocity-sweep trot, a push ladder from $10$ to $100$\,N, and three terrain scenarios (two stair heights and a $15^{\circ}$ slope). Inspection of the exploratory $n\!=\!15$ data identified three failure-boundary cells with the largest apparent cache-vs-no-cache separation; these cells were prospectively specified and then re-powered to $n\!=\!50$ with fresh seeds in a $600$-trial campaign. Combined with the exploratory campaigns, the full evaluation comprises $2{,}038$ usable cold-controller trials ($1{,}438$ exploratory after two high-push convex trials were lost to a controller startup failure unrelated to caching, plus $600$ re-power). A per-cell anomaly scanner taints trials emitting controller anomalies, NaN/Inf streams, or RMSE values above physically reasonable caps; all cells reported pass these checks. Per-campaign tabulations are provided in the supplementary material.

\paragraph{Hit-rate metrics.} The paper reports three hit-rate quantities, each labelled in context. The \emph{raw LSH-neighbour hit rate} is the fraction of MPC ticks for which the LSH index returns at least one candidate within bucket width $w$. The \emph{certified-acceptance rate} is the fraction for which a returned candidate additionally satisfies the certificate budget \eqref{eq:cert_accept}. The \emph{controller hit-path rate} is the fraction for which the controller takes the cache-hit code path (no QP solve); it equals the certified-acceptance rate when the certificate is on, and the raw LSH-neighbour rate for Cache-NoCert.

\paragraph{Production tolerances.} The condensed Hessian is regularised with $w_{qp}=10^{-2}$, preserving weak duality. The certificate is evaluated with $\boldsymbol{\lambda}\!=\!\mathbf{0}$ throughout. Certificate tolerances are $\epsilon_{\text{abs}}=5.0$, $\epsilon_{\text{rel}}=0.5$, $\epsilon_{\text{feas}}=10^{-4}$, with a cert-region band of $25$\,N. Base proportional gains for $(x,y,\psi)$ are zero because their references are velocity-integrated. Per-trial logged $|J(\bar{\mathbf{U}})|$ values lie in $[10,250]$, giving $\beta\!\in\![10,130]$ across the trot scenario.

\subsection{Solution Retrieval Time}
\label{sec:solve_time}

Per-tick MPC update times on the trot scenario, summarised across $n\!=\!15$ cold-controller trials per cell on the WBC architecture, are reported in Table~\ref{tab:solve_time}. Speedups are per-trial p50 ratios against the matched-seed Convex baseline.

\begin{table}[htbp]
\centering
\caption{Per-tick MPC update time on the trot scenario, $n\!=\!15$ trials per cell. Entries are the across-trial median in microseconds.}
\label{tab:solve_time}
\begin{tabular}{lccccc}
\toprule
Configuration & Mean & p50 & p95 & p99 & Speedup \\
\midrule
Convex       & $759$ & $711$ & $1086$ & $1758$ & ---           \\
Cache-NoCert & $184$ & $\mathbf{28}$  & $1522$ & $2281$ & $\mathbf{25.4\times}$ \\
Cache-Cert   & $842$ & $198$ & $2213$ & $3048$ & $3.6\times$  \\
Cache-Full   & $865$ & $207$ & $2105$ & $2956$ & $3.4\times$  \\
\bottomrule
\end{tabular}
\end{table}

The un-gated cache attains the largest median speedup because its hit path skips both the certificate evaluation and the QP solve, reducing to an LSH lookup followed by the apply step. The cert-gated variants pay the certificate cost (Cholesky on $\mathbf{H}$ plus the dual-gap evaluation) on every tick and plateau at a moderate median speedup; their mean and tail are inflated by certificate-rejected misses that pay both the certificate and the QP solve. The per-trial p50 distributions for the cache variants are non-overlapping with the Convex baseline across the campaign. Hit-rate curves are shown in Fig.~\ref{fig:hit_rate}.

\begin{figure}[htbp]
    \centering
    \includegraphics[width=\linewidth]{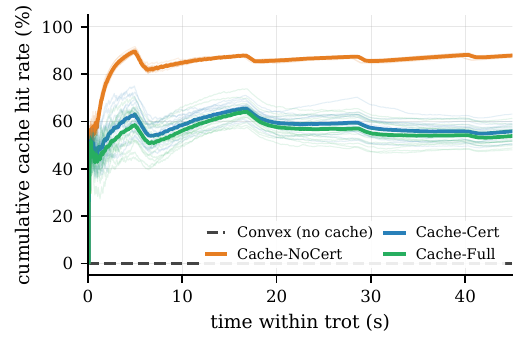}
    \caption{Cumulative cache hit rate over time within trot, $n\!=\!15$ trials. Faint traces are per-trial curves; bold lines are per-variant means. The Cache-NoCert plateau is the raw LSH-neighbour rate; the Cache-Cert and Cache-Full plateaus are the certified-acceptance rate.}
    \label{fig:hit_rate}
\end{figure}

\subsection{Locomotion Performance}\label{sec:tracking}

Tracking performance during the trot velocity sweep ($0$--$0.6$\,m/s) is summarised in Table~\ref{tab:tracking}. Per-trial $v_x$, $v_y$, and combined velocity RMSE were aggregated across $n\!=\!15$ matched-seed trials per configuration.

\begin{table}[htbp]
\centering
\caption{Tracking RMSE on the $0$--$0.6$\,m/s trot velocity sweep, $n\!=\!15$ trials per cell. All RMSE values in m/s.}
\label{tab:tracking}
\begin{tabular}{lcccc}
\toprule
Configuration & $v_x$ RMSE & $v_y$ RMSE & Vel RMSE & Stable \\
\midrule
Convex (no cache) & 0.116 & 0.067 & 0.098 & 15/15 \\
Cache-NoCert      & 0.131 & 0.080 & 0.112 & 15/15 \\
Cache-Cert        & 0.113 & 0.076 & 0.098 & 15/15 \\
Cache-Full        & 0.133 & 0.067 & 0.110 & 15/15 \\
\bottomrule
\end{tabular}
\end{table}

All four cache configurations track within $\pm 14\%$ of the matched-seed Convex baseline on velocity RMSE, the trial-to-trial spread overlaps across cells, and no trial produced a fall. Position RMSE varies widely across cells (bootstrap CI $0.6$--$1.2$\,m) because the base-frame proportional gains for $(x,y,\psi)$ are zero in the production configuration and position drifts under velocity-integrated references; velocity RMSE is therefore the discriminating tracking metric.

\subsection{Failure-Boundary Safety at $n\!=\!50$}
\label{sec:safety_boundary}

Closed-loop safety is characterised by whether the cache changes the stable rate at conditions that push the no-cache baseline to fail. The exploratory campaigns bracket two failure boundaries (WBC architecture between $80$ and $100$\,N of lateral push; MRT architecture between $50$ and $60$\,N, with an additional terrain failure at the $10$\,cm step height). The three cells with the largest apparent cache-vs-no-cache separation in the exploratory data were prospectively specified and re-powered to $n\!=\!50$ with fresh seeds; Table~\ref{tab:boundary_safety} reports the result.

\begin{table}[htbp]
\centering
\caption{Failure-boundary stable rate per configuration, $n\!=\!50$ trials per cell. Fisher $p$-values are against the no-cache baseline of the same cell and are uncorrected; the Holm--Bonferroni threshold over the 12 comparisons is $\alpha/12\!\approx\!0.0042$, so the bold entry is not corrected-significant.}
\label{tab:boundary_safety}
\begin{tabular}{llccc}
\toprule
Cell & Configuration & Stable & Rate & Fisher $p$ \\
\midrule
\multicolumn{5}{l}{WBC, $80$\,N lateral push} \\
 & Convex (no cache) & 19/50 & 38\%          & ---   \\
 & Cache-NoCert      & 28/50 & 56\%          & 0.109 \\
 & Cache-Cert        & 24/50 & 48\%          & 0.419 \\
 & Cache-Full        & 31/50 & \textbf{62\%} & \textbf{0.027} \\
\midrule
\multicolumn{5}{l}{MRT, $50$\,N lateral push} \\
 & Convex            & 10/50 & 20\%          & ---   \\
 & Cache-NoCert      &  9/50 & 18\%          & 1.000 \\
 & Cache-Cert        &  7/50 & 14\%          & 0.595 \\
 & Cache-Full        &  6/50 & 12\%          & 0.414 \\
\midrule
\multicolumn{5}{l}{MRT, $10$\,cm step traversal} \\
 & Convex            & 43/50 & 86\%          & ---   \\
 & Cache-NoCert      & 44/50 & 88\%          & 1.000 \\
 & Cache-Cert        & 43/50 & 86\%          & 1.000 \\
 & Cache-Full        & 46/50 & 92\%          & 0.525 \\
\bottomrule
\end{tabular}
\end{table}

\begin{figure}[htbp]
    \centering
    \includegraphics[width=\linewidth]{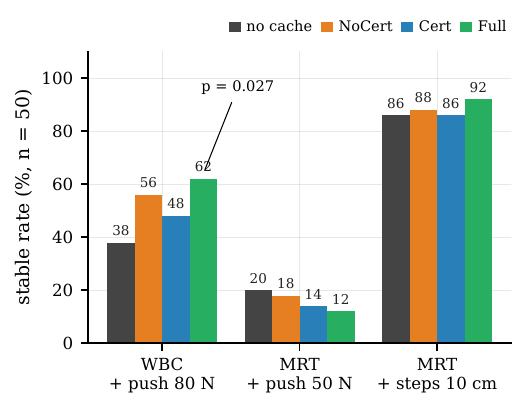}
    \caption{Failure-boundary stable rate, $n\!=\!50$ trials per cell, at three prospectively specified cells. The annotated $p\!=\!0.027$ (Cache-Full vs.\ Convex on the WBC $80$\,N cell) is the only within-cell comparison below $0.05$ and is uncorrected; it does not survive Holm--Bonferroni over the 12-comparison family.}
    \label{fig:boundary_safety}
\end{figure}

No comparison between any cache variant and the no-cache baseline survives Holm--Bonferroni correction over the 12 cell--variant comparisons at $n\!=\!50$. The largest within-cell separation in the table is descriptive only.

The operational statement supported by the data is that caching without a per-query gate does not measurably harm closed-loop stability at the failure boundaries tested. The certificate's runtime safety contribution is masked at this sample size by the downstream architecture --- the WBC's full-dynamics projection and joint-level proportional-derivative loop on one side, the foot-impedance term on the MRT side. At the matched $10$\,N exploratory cell, all four configurations survive every trial and the per-trial $v_y$ RMSE distributions overlap.

\subsection{Bounded-Budget Stress Test}
\label{sec:bounded_time_demo}

The schedule of Section~\ref{sec:bounded_time_contract} was exercised under deliberately tight deadlines on a single $45$\,s trot trial. Table~\ref{tab:stress_test} reports the outcome.

\begin{table}[htbp]
\centering
\caption{Stress trial under tight deadlines on $45$\,s of trot. Schedule parameters: $K\!=\!3$, $T_{\mathrm{cache}}\!=\!200\,\mu$s, $T_{\mathrm{solve}}\!=\!2000\,\mu$s. The closed-loop velocity RMSE is reported against a matched-seed unstressed trial.}
\label{tab:stress_test}
\begin{tabular}{lc}
\toprule
Quantity & Value \\
\midrule
Controller ticks (TROT) & $2239$ \\
Cache-budget exhausts (fall to QP solve) & $469$ \\
Solver-deadline overruns (fall to shifted fallback) & $56$ \\
Total deadline-fallback ticks & $525$ ($23.4\%$) \\
Vel RMSE (this trial) & $0.079$\,m/s \\
Vel RMSE (matched unstressed trial) & $0.078$\,m/s \\
Fall? & no \\
\bottomrule
\end{tabular}
\end{table}

The schedule meets its wall-clock budget on every tick of the trial without a fall. This is descriptive single-trial evidence of behaviour under timing pressure, not a statistical comparison.

\section{First Hardware Deployment}
\label{sec:hardware}

A first-deploy session was conducted on the Unitree Go2 with the on-robot NVIDIA Orin NX 16\,GB module (P-O); the workstation used for the simulation campaign is denoted P-W. The cross-compiled controller binary follows the same code path as the simulation controller, with the hardware build enabling Unitree DDS communication and the on-robot proprioceptive estimator. The cache file is seeded from the simulation-trained Cache-Full cache obtained at the end of the WBC trot campaign. The session is an informal free-walk: the operator drives the robot through approximately $90$\,s of trot per configuration via the gamepad and applies $20$--$30$ lateral pushes by hand. Push events are detected from $|v_y|\!>\!0.3$\,m/s during trot.

The scope of this session is limited: timing characterisation, qualitative push response, and confirmation that the simulation-trained cache transfers. The pushes are operator-applied and not controlled for force, direction, or timing across configurations, so the section reports qualitative free-walk evidence rather than a controlled comparative safety result.

Table~\ref{tab:hw_summary} reports the headline numbers; Fig.~\ref{fig:hw_cdf} plots the per-tick update-time CDF.

\begin{table}[htbp]
\centering
\caption{Hardware results on the Orin NX. The hit-rate row reports the raw LSH-neighbour rate for Cache-NoCert and the certified-acceptance rate for Cache-Full. Timing entries are in $\mu$s; push events use $|v_y|\!>\!0.3$\,m/s during trot.}
\label{tab:hw_summary}
\begin{tabular}{lccc}
\toprule
Metric & Convex & Cache-NoCert & Cache-Full \\
\midrule
Hit rate          & --- & $55.8\%$ & $16.0\%$ \\
Mean              & $1966$ & $1324$ & $2910$ \\
p50               & $1883$ & $\mathbf{101}$ & $3342$ \\
p95               & $2478$ & $3753$ & $4267$ \\
p99               & $3856$ & $5139$ & $5741$ \\
\midrule
Push events       & $21$    & $32$    & $20$ \\
Peak $|v_y|$ (m/s)& $0.84$  & $0.83$  & $0.68$ \\
Min $p_z$ (m)     & $0.301$ & $0.296$ & $0.282$ \\
Falls             & $0$     & $0$     & $0$ \\
\bottomrule
\end{tabular}
\end{table}

\begin{figure}[htbp]
    \centering
    \includegraphics[width=\linewidth]{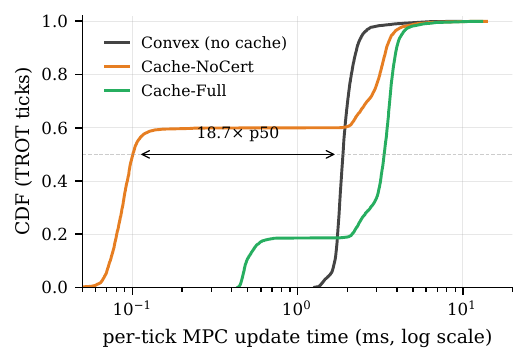}
    \caption{Per-tick MPC update-time CDF on the Orin NX, trot ticks only. The $18.7\times$ p50 Cache-NoCert speedup over Convex is visible at the $0.5$ line; Cache-Full is bimodal with a certified-acceptance plateau near $0.5$\,ms and a climb at $\sim\!3.5$\,ms from certificate-rejected ticks.}
    \label{fig:hw_cdf}
\end{figure}

The un-gated cache reaches the predicted on-robot p50 timing and shows no fall across the push events. The simulation-trained cache transfers: certified acceptance is non-zero on the first trot phase. Cache-Full is the inverted case relative to simulation, slower than Convex on the Orin NX, and its mechanism is summarised in Table~\ref{tab:hw_cachefull}.

\begin{table}[htbp]
\centering
\caption{Cache-Full on the Orin NX: the certificate's high rejection rate explains the inversion of the simulation-measured speedup. The raw LSH-neighbour rate is taken from the matched Cache-NoCert run as a cross-run proxy; the derived rejection rate assumes equal raw-LSH rates across runs.}
\label{tab:hw_cachefull}
\begin{tabular}{lc}
\toprule
Quantity & Value \\
\midrule
Raw LSH-neighbour rate (proxy, Cache-NoCert) & $55.8\%$ \\
Certified-acceptance rate (Cache-Full) & $16.0\%$ \\
Derived certificate rejection rate & $\approx\!71\%$ \\
Cache-Full p50 vs Convex p50 (sim) & $\mathbf{3.4\times}$ faster \\
Cache-Full p50 vs Convex p50 (hardware) & $1.8\times$ slower \\
\bottomrule
\end{tabular}
\end{table}

The hardware inversion is consistent with the certificate of Proposition~\ref{prop:cert} acting as designed: $\Gamma$ inflates above $\beta(\bar{\mathbf{U}})$ for the majority of LSH-neighbour candidates under the on-robot QP data, so most candidates are rejected and fall through to the full QP solve. Certificate-rejected ticks pay both the certificate evaluation and the QP solve, putting Cache-Full systematically above Convex.

\section{Threats to Validity}
\label{sec:threats}

\paragraph{Single platform and dynamics setting.} The simulation campaign uses friction $\mu=0.3$, the default MuJoCo contact solver, and flat or scripted-terrain ground. Within these conditions it covers two architectures (WBC, MRT), four cache configurations, a push ladder from $10$ to $100$\,N, and two terrain types. Generalisation across payload, contact-model variation, friction coefficient, and quadruped platform is not characterised. The hardware session covers a single platform (Unitree Go2 + Orin NX) under operator-driven free-walk on flat ground.

\paragraph{Loose certificate bound.} The certificate is evaluated in production with $\boldsymbol{\lambda}\!=\!\mathbf{0}$, the unconstrained relaxation; this gives the loosest valid weak-duality bound on $\Delta J$. A cached-dual variant of the bound, which would be tight whenever the query shares the entry's active set, is not used in the present evaluation.

\paragraph{Single push direction.} The push tests use a single lateral direction, both at the simulation failure boundary and on hardware. A $(\pm x,\pm y)$ sweep at $n\!=\!50$ would characterise the geometry of the boundary more fully.

\paragraph{Sample-size-bounded negative result.} The $n\!=\!50$ Fisher $p$-values reported in Table~\ref{tab:boundary_safety} are uncorrected and do not establish equivalence: ``no significant difference detected'' is a statement about test power, not about the magnitude of the underlying effect. The certificate may carry measurable closed-loop value in regimes that the present evaluation cannot resolve.

\paragraph{Unit of analysis.} The timing distributions in Table~\ref{tab:solve_time} are aggregated within trial and summarised across trials; per-tick hypothesis testing is avoided because per-tick samples within a trial are autocorrelated. The per-trial p50 ratios are descriptive across $n\!=\!15$ independent cold-controller spawns per cell.

\paragraph{Baseline coverage.} The cache variants are compared only against the no-cache Convex baseline of the same MPC implementation. Warm-started convex MPC, kd-tree nearest-neighbour retrieval~\cite{bentley1975kdtree}, fixed offline dictionaries, and learned approximate-MPC policies~\cite{mpcnet,hertneck2018lcss} are not included in the present comparison.

\paragraph{Reproducibility protocol.} Every cold-controller trial records the binary SHA-256, the YAML configuration that ran, the per-trial cache file, the per-tick controller log, and the scenario time-series log. Per-cell trial logs are gated by an anomaly scanner with pre-specified rejection criteria (controller anomalies, NaN/Inf streams, RMSE values above physically reasonable caps); all $n\!=\!50$ re-power cells and all reported exploratory cells pass these checks. The same controller binary path is used in simulation and on hardware; only the communication and estimator backends differ. Aggregate per-cell tabulations, per-knob sweep tables, and the full hardware-session breakdown are available from the authors on reasonable request.

\section{Conclusion}
\label{sec:conclusion}

This paper presented Certified CacheMPC, a memory-augmented MPC framework in which an LSH-indexed cache returns previously computed contact-force trajectories and an a-posteriori per-query certificate produces a Lagrangian dual-gap quantity $\Gamma(\bar{\mathbf{U}})$ that upper-bounds the true suboptimality $\Delta J$ of any feasible accepted retrieval. The certificate is deterministic and source-independent. A bounded-budget controller schedule combines top-$K$ certified retrieval, a deadline-bounded QP solve, and a shifted last-certified fallback.

The framework was evaluated through a cold-controller simulation campaign and a first-deploy session on the on-robot NVIDIA Orin NX; the headline numbers are in Tables~\ref{tab:solve_time}--\ref{tab:hw_cachefull}. The un-gated cache produces a large median solve-time speedup in simulation that transfers to hardware. No statistically significant difference between the cache variants and the no-cache baseline is detected at $n\!=\!50$ at any tested failure-boundary cell. The certificate functions as a verifiable upper bound on the cost gap of any applied control, rather than as a runtime safety mechanism whose effect the present evaluation can resolve.

\section{Future Work}
\label{sec:future_work}

The directions opened by this work are grouped into four families. \emph{Tighter certified bounds.} Reuse of cached multipliers $(\boldsymbol{\lambda}_\ell,\boldsymbol{\lambda}_u)$ would tighten $\Gamma$ towards $\Delta J$ whenever the query shares the entry's active set, given a guarded staleness check; a Hoffman-bound perturbation analysis would extend Proposition~\ref{prop:cert} to $\rho_{\mathrm{feas}}>0$. \emph{Formal closed-loop theory.} A practical-orbital stability theorem on top of the per-query budget $\beta(\bar{\mathbf{U}})$ and a worst-case real-time theorem on top of the bounded-budget schedule of Section~\ref{sec:bounded_time_contract} are natural next steps, requiring a phase-conditioned value function, recursive feasibility under contact-mode switches, and a preemption-safe inner solver. \emph{Empirical breadth.} A constant-velocity hardware tracking matrix, a powered non-inferiority hardware safety matrix, $(\pm x,\pm y)$ push sweeps at $n\!=\!50$, and richer terrain (steps, slopes, granular ground) are natural next campaigns. \emph{Baselines and cache lifecycle.} Warm-started convex MPC, kd-tree neighbour search, fixed offline dictionaries, and learned approximate-MPC policies provide stronger reference points; on-robot fine-tuning and online refinement of the cache are expected to reduce the certificate rejection rate observed on hardware.

\end{document}